\documentclass{article}

     \PassOptionsToPackage{numbers, compress}{natbib}


    \usepackage[preprint]{neurips_2021}



\usepackage{amsmath,amsfonts,bm}









\def\eqref#1{equation~\ref{#1}}









\def\1{\bm{1}}










\DeclareMathAlphabet{\mathsfit}{\encodingdefault}{\sfdefault}{m}{sl}
\SetMathAlphabet{\mathsfit}{bold}{\encodingdefault}{\sfdefault}{bx}{n}













\usepackage{url}

\usepackage[utf8]{inputenc} 
\usepackage[T1]{fontenc}    
\usepackage{url}            
\usepackage{booktabs}       
\usepackage{amsfonts}       
\usepackage{nicefrac}       
\usepackage{microtype}      
\usepackage{xcolor}         
\usepackage{amsthm}
\usepackage[numbers]{natbib}
\bibliographystyle{abbrvnat}
\usepackage{graphicx}
\usepackage{caption}
\usepackage{subcaption}

\newtheorem{theorem}{Theorem}

\def\mytitle{Certified Defense via Latent Space Randomized Smoothing with Orthogonal Encoders}
\title{\mytitle}

\title{Certified Defense via Latent Space Randomized Smoothing with Orthogonal Encoders}

%

\author{%
  Huimin Zeng\\
  Technical University of Munich \\
  \texttt{huimin.zeng@tum.de} \\
   \And
   Jiahao Su \\
   University of Maryland, College Park \\
   \texttt{jiahaosu@umd.edu} \\
   \AND
   Furong Huang \\
   University of Maryland, College Park \\
   \texttt{furongh@cs.umd.edu} \\
}

\begin{document}

\maketitle

\begin{abstract}
Randomized Smoothing (RS), being one of few provable defenses, has been showing great effectiveness and scalability in terms of defending against $\ell_2$-norm adversarial perturbations. However, the cost of MC sampling needed in RS for evaluation is high and computationally expensive. To address this issue, we investigate the possibility of performing randomized smoothing and establishing the robust certification in the latent space of a network, so that the overall dimensionality of tensors involved in computation could be drastically reduced. To this end, we propose Latent Space Randomized Smoothing. Another important aspect is that we use orthogonal modules, whose Lipschitz property is known for free by design, to propagate the certified radius estimated in the latent space back to the input space, providing valid certifiable regions for the test samples in the input space. Experiments on CIFAR10 and ImageNet show that our method achieves competitive certified robustness but with a significant improvement of efficiency during the test phase. 
\end{abstract}

\section{Introduction}
\label{sec:intro}
Deep neural networks (DNNs) show impressive performance in many tasks, like image recognition, language understanding and audio processing. However, it is also widely known that deep neural networks can be vulnerable to adversarially perturbed input examples~\citep{szegedy2013intriguing}. Therefore, it is important to have strong defenses against such adversarial examples, especially for security-critical scenarios. 
Empirical defenses have been proposed to train robust classifiers~\cite{AdversarialExamplesAreNotEasilyDetected:BypassingTenDetectionMethods, Kannan2018AdversarialLP, kurakin2016adversarial, shaham2018understanding}. Despite their successes in defending against certain attacks, there is no worst-case performance guarantees. Another line of work \cite{raghunathan2018certified, wong2017provable} is called certified defense, which provides provable robustness: bounded perturbation in the input will not cause a large change in the output. However, existing certified defense fails to scale to large datasets or apply to arbitrary model types.  

\emph{Input Space Randomized Smoothing (IS-RS)} \cite{cohen2019certified} provides certified robustness of a smoothed classifier, smoothed-out by \emph{input space Gaussian augmentations}. Compared with other certified defenses, {IS-RS} guarantees certified robustness scalable to large-scale datasets and large network architectures. However, in this work, we argue that the efficiency of IS-RS can be further improved. There are three sources for the inefficiencies in IS-RS: (1) The deep depth of the network that the Gaussian noise has to (forward/backward) propagate through. (2) The large number of Gaussian samples needed for an accurate enough empirical estimation of the expectation over the Gaussian distribution. (1) and (2) are the most important factors. In addition, there is another minor issue: (3) The high-dimensionality of the space in which we sample Gaussian noise. These three factors will negatively affect {the efficiency of} randomized smoothing, in terms of training, inference as well as the final robustness evaluation. It is true that training and inference of the classifier may not require a lot of Gaussian samples per example, but if we consider the entire training set and test set, the overall number of Gaussian noises needed is large. More importantly, evaluating the certification for each data point requires a large quantity of Gaussian noises to maintain the high confidence (usually over 100,000 for a single image). 
Therefore, it is crucial to increase the efficiency of IS-RS.

Motivated by observing the sources of inefficiencies of IS-RS, we propose \emph{latent space Randomized Smoothing (LS-RS)}, which achieves efficient certified robustness of a classifier smoothed by injecting Gaussian noise in the (compact) latent space. In order to define the latent space of a network more precisely, we split a normal neural network $f$ into two sub-networks, as shown in Figure~\ref{fig:split}: an \textbf{encoder} $f_e$ and a \textbf{classifier} $f_c$. The latent space is the space that links the output feature space of $f_e$ and the input feature space of $f_c$. Under this setting, the forward pass of any given input is computed with $f(\bm{x}) = f_c(f_e(\bm{x}))$ and the output of $f_e$, i.e. $\bm{z}$, will be smoothed with Gaussian noises to obtain the a smoothed function.

\begin{figure}[!htbp]
  \centering
  \includegraphics[width=0.8\linewidth]{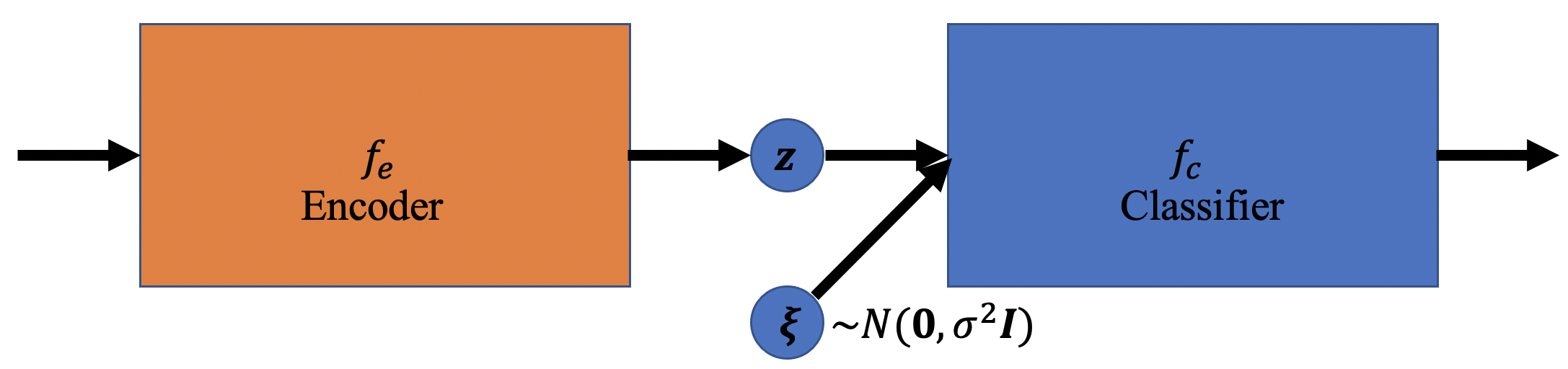} 
  \caption{Split a network into two parts.}
  \label{fig:split}
\end{figure}

LS-RS is computational more efficient than IS-RS due to the following reasons.
\textbf{(1)} LS-RS could be implemented at deeper layers, 
therefore the effective depth required for forward/backward propagation of the Gaussian augmented images could be significantly reduced. 
\textbf{(2)} LS-RS could be implemented on a compact latent space, with a representation size potentially significantly smaller than the size of the input layer, improving the inefficiency rooted from the high-dimensionality of the space to inject Gaussian noise. In fact, LS-RS provides us with great \textbf{flexibility} of choosing the latent space, via which the dimensionality of latent space could be easily controlled.


There are a few technical challenges to implement an LS-RS algorithm. \textbf{Challenge I:} A certified robustness guarantee against adversarial perturbations in the input space, for a LS-RS model, does not exist. The reason is that the Gaussian noises are sampled in the latent space instead of the input space. To this end, the robustness certification of randomized smoothing could only be established for $f_c$ instead of $f$, which is frustratingly useless, since in practice, the adversarial perturbations are usually created for corrupting the input space. 
\textbf{Challenge II:} Even if a certified radius in the latent space is obtained, reverting it back to the input space requires a tight characterization of the Lipschitz constant of the sub-network (encoder) $f_e$, mapping the data in the input space to that in the latent space. Computing the Lipschitz constant of a network is usually computationally difficult.
\textbf{Challenge III:} The Lipschitz constant of a linear layer is an upper bound of the singular values across all singular vectors (i.e., spectral components). An algorithm to compute the Lipschitz constant of the sub-network $h$ might require computation of the SVD of convolutional layers, which is too expensive and thus violating our purpose of improving the efficiency of IS-RS. In addition, more seriously, under bad condition numbers, i.e., the singular values vary drastically across different spectral components, the Lipschitz constant of the network might be too loose to obtain competitive certified radius in the input space.

In this paper, to solve the aforementioned challenges in implementing LS-RS, we propose a novel design of the encoder $f_e$ to be an \emph{orthogonal convolutional network}~\cite{jia2019orthogonal, wen2020towards}.
In other words, we design the encoder $f_e$ such that for any $\bm{x}$ and $\bm{x}'$, $\lVert \bm{f_e}(\bm{x})-\bm{f_e}(\bm{x}') \rVert_2= \lVert \bm{x} - \bm{x}'\rVert_2$ and $\lVert \bm{\nabla{f_e}}{\bm{x}} \rVert_2 = 1$. It is obvious that the singular values of all spectral components $\equiv 1$ by model design. With this guaranteed ``flattness'' of the spectral components for the sub-network, we obtain the Lipschitz constant for free without any additional computation and our characterization of the Lipschitz constant is tight enough for a competitive certified radius in the input space, as verified in our experiments.

We established an equivalency between IS-RS and LS-RS in terms of certified accuracy under our framework.



\paragraph{Summary of contributions:} 
\begin{enumerate}
    \item We introduce a novel latent space randomized smoothing by augmenting the latent feature representation with i.i.d. Gaussian noise instead of the input images. 
    
    \item Based on LS-RS, we  significantly increase the efficiency of the randomized smoothing framework from different aspects. For instance, the choice of the latent space is flexibility, which could further reduce the forward/backward complexity. On CIFAR10 dataset \cite{krizhevsky2009learning}, the average time used for certifying one single image could be reduced from around 12 seconds to around 8 seconds, a 33.3\% efficiency improvement, with only 5.93\% degradation in performance for $\sigma = 0.25$, 5.50\% for $\sigma = 0.50$ and 1.50\% for $\sigma = 1.0$. Similarly, on ImageNet \cite{russakovsky2015imagenet}, we can also observe a significant improvement on test efficiency without sacrificing much accuracy.

    \item Finally, we point out a technical contribution of adopting orthogonal convolutinoal layers and the norm-preserving GroupSort non-linear activation to build the sub-network $f_e$. Without exhaustively and expensively computing the Lipschitz of the non-convex encoder, we are able to easily find equivalency between the certified radius of the input space and that of the latent space.
    
\end{enumerate}

\section{Methodology}
\label{sec:alg}
We will start by reviewing the randomized smoothing framework. Let us consider a classification model $f: \bm{\mathcal{X}} \to \mathcal{Y}$, mapping examples
in an input space $\bm{x} \in \bm{\mathcal{X}}$ to a label $y \in \bm{\mathcal{Y}}$ in the label space.
Then there exists a robustness guarantee for a ``smoothed'' version $g$ of the base classifier $f$. Formally, the smoothed classifier $g$ is defined as:
\begin{equation}
    g(f, \bm{x}, \sigma^2) = \underset{\hat{y} \in \mathcal{Y}}{\mathrm{argmax}} \mathbb{P}(f(\bm{x} + \bm{\xi}) = \hat{y}), \quad \mathrm{where} \quad \bm{\xi} \sim \mathcal{N}(\bm{0}, \sigma^2 \bm{I}).
    \label{eq:smoothed_classifier}
\end{equation}

In other words, given a test input $\bm{x}$, a smoothed classifier $g$ will augment it with isotropic Gaussian noises (parameterized by $\sigma$), and predict a label that the majority of the augmented images output after propagating through the base classifier $f$.
If we denote the  class output by the majority of the Gaussian augmented images as
$$y_A = g(f, \bm{x}, \sigma^2) = \underset{y_A \in \mathcal{Y}}{\mathrm{argmax}} \mathbb{P}(f(\bm{x} + \bm{\xi}) = y_A)$$ and any other ``runner-up'' class as $$y_B = \underset{y_B \neq y_A}{\mathrm{argmax}} \mathbb{P}(f(\bm{x} + \bm{\xi}) = y_B),$$ then the result of the smoothed classifier under any perturbation of the input within a radius $R$ will be robust
\begin{equation}
    \begin{split}
        g(\bm{x} + \bm{\delta
        }) = y_A, \quad  \forall \| \bm{\delta} \|_2 < R
    \end{split},
\end{equation}
where  the certified radius depends on the base classifier $f$ and the Gaussian Standard Deviation $\sigma$
\begin{equation}
    R = \frac{\sigma}{2} (\Phi^{-1}(\mathbb{P}(f(\bm{x} + \bm{\xi}) = y_A)) - \Phi^{-1}( \mathbb{P}(f(\bm{x} + \bm{\xi}) = y_B))), 
    \label{eq:rs_robust_guarantee}
\end{equation}
where $\Phi^{-1}$ denotes the inverse of the standard Gaussian CDF. 

\subsection{Latent Space Randomized Smoothing}
To implement LS-RS, we propose to split a neural network into two sub-networks $f_e$ and $f_c$ as shown in Figure~\ref{fig:split}. Sampling the parameterized Gaussian noise in the latent space to augment the latent feature representation $\bm{z}$ requires only the second part of the network $f_c$ to be ``smoothed''. Formally, we define the '`partially'' smoothed classifier $\tilde{g}$ as following: 
\begin{equation}
    \begin{split}
        \tilde{g}(f_e, f_c, \bm{x}, \sigma^2) &= \underset{\hat{y} \in \mathcal{Y}}{\mathrm{argmax}} \mathbb{P}(f_c(f_e(\bm{x}) + \bm{\xi}) = \hat{y}), \\
        & \mathrm{where} \quad \bm{\xi} \sim \mathcal{N}(\bm{0}, \sigma^2 \bm{I}).
    \end{split}
    \label{eq:smoothed_classifier_latent}
\end{equation}
For simplicity, we use $\bm{z}$ to denote unsmoothed representation in the latent space: $f_e(\bm{x}):=\bm{z}$.
Therefore, Equation~\ref{eq:smoothed_classifier_latent} can be re-formulated as 
\begin{equation}
    \begin{split}
        \tilde{g}(f_e, f_c, \bm{x}, \sigma^2) &= \underset{\hat{y} \in \mathcal{Y}}{\mathrm{argmax}} \mathbb{P}(f_c(\bm{z} + \bm{\xi}) = \hat{y})\\
        & :=  g_c(f_c, \bm{z}, \sigma^2) \\
        & \mathrm{where} \quad \bm{\xi} \sim \mathcal{N}(\bm{0}, \sigma^2 \bm{I}).
    \end{split}
    \label{eq:smoothed_classifier_latent_simple}
\end{equation}
Comparing Equation~\ref{eq:smoothed_classifier_latent_simple} with Equation~\ref{eq:smoothed_classifier}, it is straightforward to derive the robustness guarantee:
\begin{equation}
    \begin{split}
        \forall \, \| \bm{\delta_z} \|_2 < R_z,
    \end{split}
    \label{eq:rs_robust_guarantee_latent_1}
\end{equation}
where, 
\begin{equation}
    R_z = \frac{\sigma}{2} (\Phi^{-1}(\mathbb{P}(f_c(\bm{z} + \bm{\xi}) = y_A)) - \Phi^{-1}( \mathbb{P}(f_c(\bm{z} + \bm{\xi}) = y_B))),
    \label{eq:rs_robust_guarantee_latent_2}
\end{equation}
it is true that
\begin{equation}
    g_c(\bm{z} + \bm{\delta_z}) = y_A.
    \label{eq:rs_robust_guarantee_latent_3}
\end{equation}
Therefore, Equation~\ref{eq:rs_robust_guarantee_latent_1} to Equation~\ref{eq:rs_robust_guarantee_latent_3} provide robustness guarantee for $f_c$.

\subsection{Lipschitz Preserving Layers}



The core idea of our approach is to adopt Lipschitz-preserving layers to derive the certification in the input space from the latent space. Here, we introduce the Lipschitz-preserving layers that we will use to build our network, including orthogonal convolutional layers and non-linear activation.

\textbf{Orthogonal convolutional layers.} A circular convolutional layer $\mathsf{conv}$, parameterized by weights $\bm{W} \in \mathbb{R}^{c_{out} \times c_{in} \times n \times n}$, is orthogonal if its input $\bm{X} \in \mathbb{R}^{c_{in} \times n \times n}$ and output $\mathsf{conv}(\bm{X}) \in \mathbb{R}^{c_{out} \times n \times n}$ satisfy that   
\begin{equation}
\begin{split}
    \| \mathsf{conv}(\bm{X}) \|_F &\equiv \|\bm{X}\|_F, \\
    \| \frac{\partial{\mathsf{Loss}}}{\partial{\mathsf{conv}(\bm{X})}} \| &\equiv \| \frac{\partial{\mathsf{Loss}}}{\partial{\bm{X}}} \| , \, \forall \bm{X}.
\end{split}
\end{equation}

Existing works that implement orthogonal convolutional layers are mainly in two categories, namely encouraging orthogonality through a penalization term in the objective function and enforcing orthogonal through parameterization or model design. \cite{gouk2021regularisation, tsuzuku2018lipschitz, li2019preventing, sedghi2018singular, singla2019bounding, anil2019sorting, trockman2021orthogonalizing, jia2019orthogonal, wen2020towards, jia2017improving, cisse2017parseval} The former does not guarantee orthogonality (for instance, computing and regularizing the largest singular value of a convolutional layer) whereas the latter does guarantee. For our purpose of obtaining Lipschitz preserving layers, we use parameterization or model design, i.e., the latter, to enforce orthogonality.

Since the Cayley transform of a skew-symmetric matrix is always orthogonal, \citet{trockman2021orthogonalizing} proposed to apply the Cayley transform to skew-symmetric convolutions in the Fourier domain to parameterize such convolutional layers to be orthogonal. Mathematically, for a skew-symmetric matrix $\bm{A}$, satisfying $\bm{A} = -\bm{A}^T$, the Cayley transform guarantees the matrix $\bm{Q}$ to be orthogonal:
\begin{equation}
    \bm{Q} = (\bm{I} - \bm{A})(\bm{I} + \bm{A})^{-1} = (\bm{I} + \bm{A})^{-1} - \bm{A}(\bm{I} + \bm{A})^{-1}.
    \label{eq:cayley_transform}
\end{equation}
However, directly applying the Cayley transform to the convolutions of the neural network could be problematic: even if convolutions could be easily skew-symmetrized, it is rather inefficient to find their inverse. Technically, one can firstly perform the Fast Fourier Transform ($\mathsf{FFT}$) on the weights of the conlutoinal layer $\bm{W}$ and the input tensor $\bm{X}$, converting them into spectral domain: $\Tilde{\bm{W}}= \mathsf{FFT}(\bm{W})$ and $\Tilde{\bm{X}}= \mathsf{FFT}(\bm{X})$. Correspondingly, in the Fourier domain, the resulted $(i,j)^{th}$ pixel after the convolution and the inverse convolution could be computed using
$$
\mathsf{FFT}(\mathsf{conv}{(\bm{X})})[:,i,j] = \Tilde{\bm{W}}[:,:,i,j] \Tilde{\bm{X}}[:,i,j]
$$ 
and 
$$
\mathsf{FFT}(\mathsf{conv}^{-1}{(\bm{X})})[:,i,j] = \Tilde{\bm{W}}[:,:,i,j]^{-1} \Tilde{\bm{X}}[:,i,j].
$$
Then, the Fourier-domain weights for the skew-symmetric convolution (using the conjugate transpose) and certain matrices required for inverse $\mathsf{FFT}$ are computed:
$$
\Tilde{\bm{A}}[:,:,i,j] := \Tilde{\bm{W}}[:,:,i,j] - \Tilde{\bm{W}}[:,:,i,j]^*
$$
and
$$
\Tilde{\bm{Y}}[:,i,j] := (\bm{I} + \Tilde{\bm{A}}[:,:,i,j])^{-1} \Tilde{\bm{X}}[:,i,j].
$$

Next, based on all matrices computed just now, it is possible to compute the Cayley transform:
$$
\Tilde{\bm{Z}}[:,i,j] := \Tilde{\bm{Y}}[:,i,j] - \Tilde{\bm{A}}[:,:,i,j]\Tilde{\bm{Y}}[:,i,j].
$$
Plugging in $\Tilde{\bm{A}}$ and $\Tilde{\bm{Y}}$, the orthogonal convolution in the spectral domain is achieved according to Equation~\ref{eq:cayley_transform}:
\begin{equation}
    \begin{split}
        \Tilde{\bm{Z}} &= (\bm{I} + \Tilde{\bm{A}})^{-1}\Tilde{\bm{X}} - \Tilde{\bm{A}}(\bm{I} + \Tilde{\bm{A}})^{-1} \Tilde{\bm{X}} \\
        &= [(\bm{I} + \Tilde{\bm{A}})^{-1} - \Tilde{\bm{A}}(\bm{I} + \Tilde{\bm{A}})^{-1}] \Tilde{\bm{X}} \\
        & := \Tilde{\bm{Q}} \Tilde{\bm{X}}, 
    \end{split}
\end{equation}
where $\Tilde{\bm{Q}}$ is orthogonal. Therefore, ultimately, the results in the spatial domain could be obtained by applying inverse $\mathsf{FFT}$ to $\Tilde{\bm{Z}}$:
$$
\mathsf{FFT}^{-1}(\Tilde{\bm{Z}}),
$$
which is the output of the orthogonal convolutional layer.


\textbf{GroupSort activation.} We adopt an
alternative activation function to build our encoder, which is called GroupSort \cite{anil2019sorting}. GroupSort separates
the variables before the activation into groups, sorts each group into ascending order, and outputs the combined vector. Note that GroupSort is both Lipschitz and gradient norm-preserving. The Lipschiz-preserving property of GroupSort enables us to restrain our encoder to be orthogonal. As for preserving the gradient norm, this property contributes to the training of the orthogonal encoder, since there will be no gradient vanishing problem. (Unlike ReLU, which is not norm-preserving and could lead to gradient vanishing.) 


The major advantage of using these modules is that the Lipschitz constant of these layers is 1. Therefore, the concatenation of them, including orthogonal fully connected layers, orthogonal convolutional layers and other Lipschitz-preserving non-linear functions, will lead to a orthogonal network with a global Lipschitz constant 1. This property enables us to establish the relationship between the certified radius computed in the latent space and that of the input space easily. More precisely, the certified radius in the input space will be preserved after the Lipschitz-preserving layers.

\section{Guarantees and Analysis}

\subsection{Robustness Guarantee}
From Equation~\ref{eq:rs_robust_guarantee_latent_1} to Equation~\ref{eq:rs_robust_guarantee_latent_3}, we know how the certified radius could be computed for the latent representation. In other words, we can only guarantee that the perturbations in the latent space within $R_z$ will not cause wrong predictions for the given input example. The next step is to understand the relationship between the perturbation in the latent space and the perturbation in the input space. Obviously, it is not sufficient to find robustness guarantee in the latent space, since the attackers will always try to directly perturb the images in the input space. How can we find the certified radius in the input space when the certified radius in the latent space is available?
\begin{theorem}
\label{theo:latent_certification}
Split a base classifier $f$ into an encoder $f_e$ and a classifier $f_c$. Let $f_c$ be $L$-Lipschitz, then within the certified radius $R_z/L$, it is guaranteed that all possible adversarial examples $\bm{x}'$ will labeled by $\tilde{g}(f_e, f_c, \bm{x}, \sigma^2)$ as $y_A$, which is the same as $\bm{x}$.
\end{theorem}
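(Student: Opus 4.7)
The plan is to transport the latent-space certified radius back to the input space by a straightforward Lipschitz continuity argument. I note upfront what I believe is a typo in the hypothesis: since the quantity we need to bound is the latent-space perturbation $\bm{\delta_z} = f_e(\bm{x}') - f_e(\bm{x})$, and since $f_c$ only acts after $f_e$, the meaningful hypothesis must be that the \emph{encoder} $f_e$ is $L$-Lipschitz (which is exactly what the orthogonal construction in Section~\ref{sec:alg} delivers, with $L = 1$). I will carry out the proof under this reading.

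First, I would fix an arbitrary candidate adversarial input $\bm{x}'$ satisfying $\|\bm{x}' - \bm{x}\|_2 < R_z/L$, and define its latent image $\bm{z}' := f_e(\bm{x}')$ together with the induced latent perturbation $\bm{\delta_z} := \bm{z}' - \bm{z} = f_e(\bm{x}') - f_e(\bm{x})$. The $L$-Lipschitz property of $f_e$ then gives
\begin{equation*}
\|\bm{\delta_z}\|_2 = \|f_e(\bm{x}') - f_e(\bm{x})\|_2 \,\leq\, L\,\|\bm{x}' - \bm{x}\|_2 \,<\, L\cdot \frac{R_z}{L} \,=\, R_z.
\end{equation*}
Hence $\bm{\delta_z}$ lies strictly inside the $\ell_2$-ball of radius $R_z$ around $\bm{z}$ in the latent space.

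Next, I would invoke the latent-space certificate already established by Equations~\ref{eq:rs_robust_guarantee_latent_1}--\ref{eq:rs_robust_guarantee_latent_3}: since $\|\bm{\delta_z}\|_2 < R_z$, the smoothed classifier $g_c$ on the latent representation satisfies $g_c(\bm{z} + \bm{\delta_z}) = y_A$. Finally, I would unfold the definition of the partially smoothed classifier in Equation~\ref{eq:smoothed_classifier_latent_simple}: by construction $\tilde{g}(f_e, f_c, \bm{x}', \sigma^2) = g_c(f_c, f_e(\bm{x}'), \sigma^2) = g_c(f_c, \bm{z}+\bm{\delta_z}, \sigma^2)$, which by the previous step equals $y_A$. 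Since $\bm{x}'$ was arbitrary in the $R_z/L$-ball, this is exactly the claim.

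Honestly, there is no real obstacle once the identifications $\bm{z}' = \bm{z} + \bm{\delta_z}$ and $\tilde{g} \circ f_e = g_c$ are made; the theorem is a two-line composition of Lipschitzness with the latent-space guarantee. The only thing worth flagging carefully is (i) the $f_e$-vs-$f_c$ typo in the hypothesis, and (ii) the fact that the argument is stated with a strict inequality, so one should either use the open ball or else argue continuity/closure to extend to the closed ball $\|\bm{\delta}\|_2 \le R_z/L$ if that form is desired later.
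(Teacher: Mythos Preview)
Your proof is correct and follows essentially the same route as the paper's own argument: apply the $L$-Lipschitz bound on $f_e$ to push the input-space ball of radius $R_z/L$ into the latent ball of radius $R_z$, then invoke the latent certificate and unfold the definition of $\tilde{g}$. Your identification of the $f_c$/$f_e$ typo in the hypothesis is also correct; the paper's proof indeed uses ``$L$-Lipschitz of $f_e$'' in its first line.
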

\begin{proof}
    $L$-Lipschitz of $f_e$ provides
$$
    \|f_e{(\bm{x})} - f_e{(\bm{x}')}\| = \|\bm{z}- \bm{z}'\| \leq L\|\bm{x} - \bm{x}'\|.
    \label{eq:Lipschitz_encoder}
$$
Given certification
$$
    \forall \, \| \bm{\delta_z} \| < R_z, \quad g_c(\bm{z} + \bm{\delta_z}) = y_A.
    \label{eq:robust_guarantee_proof}
$$
Therefore, if $\|\bm{x} - \bm{x}' \| < R_z / L$, then
$$
    \|\bm{\delta}_z \| = \|\bm{z}- \bm{z}'\| \leq L\|\bm{x} - \bm{x}'\| < R_z,
$$
and 
$$
    g_c(f_c, \bm{z}', \sigma^2) = \tilde{g}(f_e, f_c, \bm{x}', \sigma^2) = y_A.   
$$
\end{proof}
Theorem~\ref{theo:latent_certification} demonstrates how to establish the robustness certification in the input space using the certification of latent space, when the encoder is L-Lipschitz. Obviously, since the encoder is allowed to amplify the input signal $L$-times, the certified radius computed in the latent space must be divided by $L$ to obtain the radius in the input space. Moreover, when use norm-preserving layers to build the encoder, indicating that the Lipschitz constant of $f_e$ is $1$, the certified radius in the input space is bounded by the certified radius computed in the latent space.

\subsection{Does Rescaling of Lipschitz Affect?}
When we review the proof of Theorem~\ref{theo:latent_certification}, it is to notice that theoretically, orthogonality is not the necessary condition to derive the robustness guarantee. That is, it is not required to use layers with Lipschitz constant 1 to build the encoder $f_e$. However, we argue that rescaling the Lipschitz constant will not affect the tightness of the bounds.

Consider an $L$-Lipschitz encoder $f_e^{(1)}$. We compute the certified radius in the latent space $R_z$, using Theorem~\ref{theo:latent_certification}. Therefore, the resulting certified radius in the input space is $R_z / L$. For $f_e^{(1)}$, we can always split it into two modules, a multiplication module of constant $L$ and an orthogonal network $f_e^{(2)}$. 
As a result, given two input images $\bm{x}$ and $\bm{x}'$, we have
$$
    \| \bm{f_e}^{(1)}(\bm{x}) - \bm{f_e}^{(1)}(\bm{x}')\| \leq L \| \bm{x} - \bm{x}'\|.
$$
Similarly, for $f_e^{(2)}$, it is true that 
$$
    \| \bm{f_e}^{(2)}(L\bm{x}) - \bm{f_e}^{(2)}(L\bm{x}')\| \leq \| L\bm{x} - L\bm{x}'\| = L \| \bm{x} - \bm{x}'\|.
$$

Therefore, it is sufficiently representative to use orthogonal encoders to compute tight certified radius. Using an encoder with larger Lipschiz constant will not contribute to a better certified radius, since we can just multiply the input with this large constant and then use an orthogonal encoder to get the equivalent result.

\section{Experiments}
\label{sec:exp}
In this section, we evaluate our proposed latent space randomized smoothing on two benchmark datasets. In order to address the inefficiency of randomized smoothing we pointed out in the first section, we compare the efficiency of our proposed LS-RS and baseline IS-RS by reporting average time consumption for certifying one test sample. Moreover, to evaluate the tightness of the robustness guarantee provided by LS-RS we also report the averaged certified radius (ACR) $\frac{1}{N_{test}}\sum_{i}R_i$ \cite{zhai2020macer} of the input space for both datasets. Finally, we present the ablation study of tuning the depth of the latent space to show how LS-RS enables great flexibility in term of controlling the depth of the classifier and the dimensionality of the latent space.

\noindent \textbf{Baselines and experimental settings.} We use Input Space Randomized Smoothing (IS-RS) \cite{cohen2019certified} as the baseline algorithm. For CIFAR10 dataset, we use ResNet18 as the baseline model and modify the architecture by converting its first convolutional layer to be orthogonal and replacing its first $K$ residual blocks with orthogonal residual blocks. As for ImageNet, we use WideResNet34 as the baseline architecture and follow the similar routine to obtain its orthogonal version. We evalute our models using GeForce RTX 2080Ti 11GB.

\noindent \textbf{Implementation of the networks.} We show how the orthogonal version of resnets could be implemented by firstly demonstrating the orthogonal skip connection and then the resulted network with substitutions. As argued by \citet{trockman2021orthogonalizing}, the Lipschitz constant of residual connections $\bm{f}_{res}$ could be ensured by making the main branch $\bm{f}_{main}$ and the skip connection $\bm{X}$ a convex combination with a new learnable parameter $\alpha \in [0,1]$: $f_{res}(\bm{X}) = \alpha f_{main}(\bm{X}) + (1-\alpha)(\bm{X})$. Therefore, we use the modified skip connection (shown in Figure~\ref{fig:rescaled_skip_connection}) to build our orthogonal encoder in all experiments. Moreover, as we mentioned just now, LS-RS provides us with great flexibility of tuning the depth of the latent space. As shown in Figure~\ref{fig:fraction_replacement}, we can control how many the original free skip connections could be replaced by the orthogonal ones. The fraction reveals exactly how we can control the depth as well as the dimensionality of the latent space.

\begin{figure}[!htbp]
     \centering
     \begin{subfigure}[b]{0.45\textwidth}
         \centering
         \includegraphics[width=0.5\textwidth]{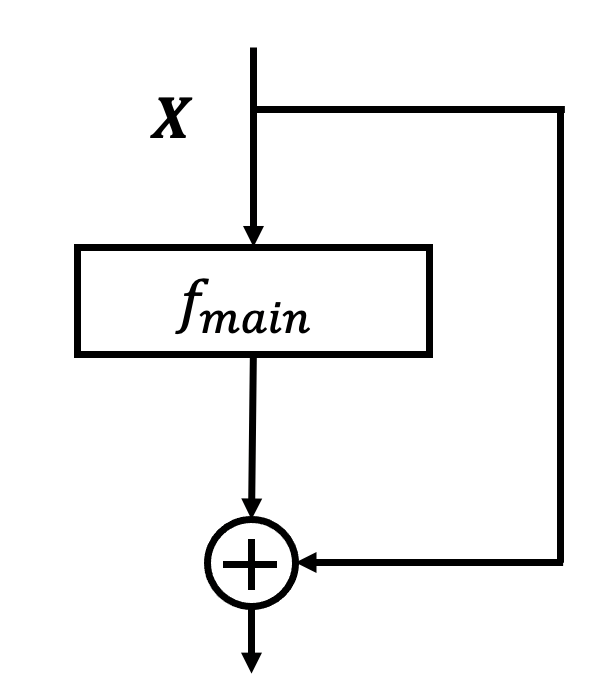}
         \caption{vanilla skip connection}
         \label{fig:vanilla_skip_connection}
     \end{subfigure}
     \begin{subfigure}[b]{0.45\textwidth}
         \centering
         \includegraphics[width=0.5\textwidth]{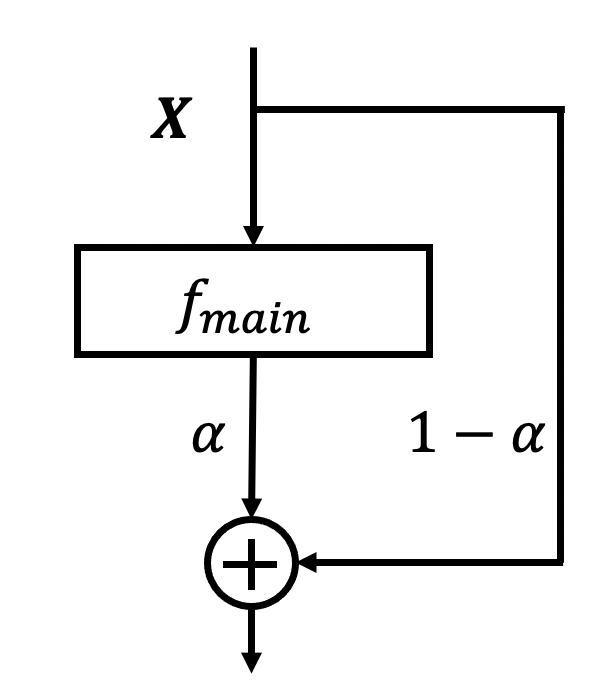}
         \caption{convex combination of two branches}
         \label{fig:rescaled_skip_connection}
     \end{subfigure}
    \caption{Comparison between the original skip connection and the modified skip connection.}
    \label{fig:skip_connections}
\end{figure}
\begin{figure}[!htbp]
     \centering
     \includegraphics[width=0.8\textwidth]{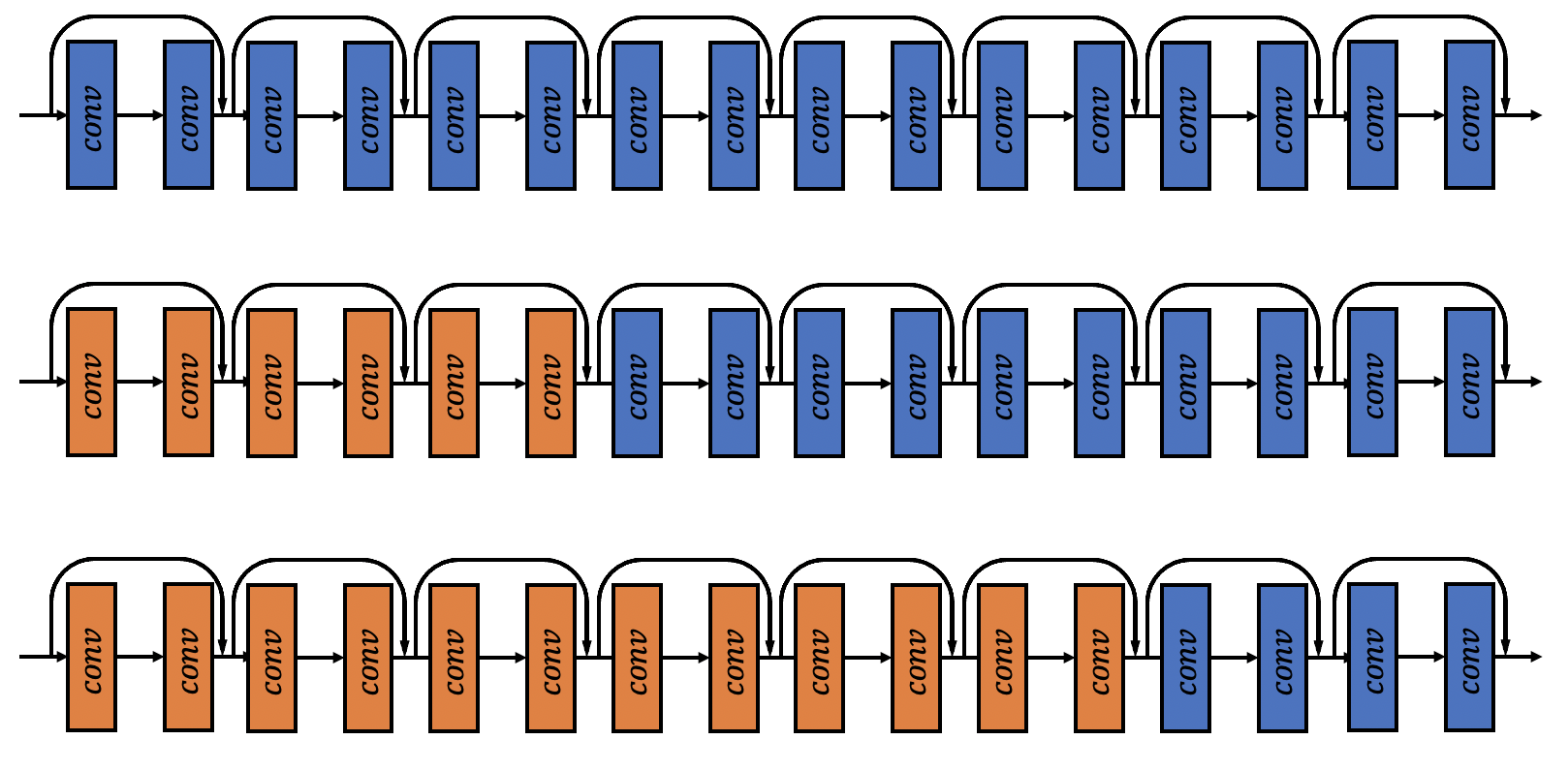}
    \caption{Replace the skip connections in a network with orthogonal skip connections. The first network is the baseline network without any orthogonal module. For the second network, three out of eight skip connections are substituted by orthogonal ones, whereas for the third network, six out of eight are replaced.}
    \label{fig:fraction_replacement}
\end{figure}

\noindent \textbf{Experimental results and analysis.} In this section, by observing the results showing in Table~\ref{tab:CIFAR_compare}, we firstly conclude that the speed of certification of LS-RS is much faster than IS-RS, while achieving comparable robustness guarantee on CIFAR10 dataset. Moreover, we also include the experimental results on ImageNet in Appendix, which also verify this statement. Note that in all tables, the hyperparameter $FoR$ refers to the fraction of replacement with orthogonal skip connections. Speaking of the flexibility provided by LS-RS, we also report Table~\ref{tab:CIFAR_space_tune} to show the effect of the depth of the latent space. Obviously, the deeper the latent space is established, the faster the certification could proceed, whereas both the robustness and the accuracy would be sacrificed. Actually, according to the concentration theory, the number of Gaussian samples needed to approximate the expectation using the empirical mean is higher in a higher-dim space. This intuition is problematic. Since in practice, it is likely that the latent space happens to be a space whose dimensionality is raised in comparison to the input space. In the experiments on CIFAR10, for Table~\ref{tab:CIFAR_compare}, the dimensionality of the input space is $3 \times 32  \times 32 = 3072$, whereas the dimensionality of the latent space is $128 * 28 * 28 = 32768$. As shown in the table, even if we raise the dimensionlity in the latent space and do not sample more Gaussian samples, we are still able to achieve much higher efficiency and comparable certified radius. Finally, concerning the performance drop in Table~\ref{tab:CIFAR_space_tune}, one possible reason is that the training of deep orthogonal layers could be challenging. Unlike training a normal neural network, there is no batch normalization or similar training-stabilizing methods available to optimize the training process of orthogonal networks. Moreover, since the Lipschitz constants of the orthogonal convolutional layers are restricted to be 1, indicating that the expressive power of the encoder $f_e$ could be still limited. With more norm-preserving modules in the network, the expressive power of the entire network could be severely limited, leading to the less satisfying results.

\begin{table}[!htbp]
\centering
\resizebox{0.7\columnwidth}{!}{\begin{tabular}{c|c|c|c|c|c}
\hline
\textbf{Defense} & $\sigma$ & FoR & ACR & Accuracy (\%) &time (s/example)  \\
\hline
IS-RS & 0.00 & 0/18 & 0.000 & 93.15 &  - \\
IS-RS & 0.25 & 0/18 & 0.472 & 80.78 & 11.212 \\
IS-RS & 0.50 & 0/18 & 0.564 & 68.24 & 12.747 \\
IS-RS & 1.00 & 0/18 & 0.532 & 49.35 & 12.953 \\
\hline
\hline
LS-RS & 0.00 & 9/18 & 0.000 & 86.92 & - \\
LS-RS & 0.25 & 9/18 & 0.444 & 77.56 & 7.904 \\
LS-RS & 0.50 & 9/18 & 0.533 & 66.68 & 7.771 \\
LS-RS & 1.00 & 9/18 & 0.524 & 50.72 & 7.985 \\
\hline
\end{tabular}} 
\vspace{1.0em}
\caption{Efficiency and robustness evaluation on CIFAR10.} 
\label{tab:CIFAR_compare}
\end{table}

\begin{table}[!htbp]
\centering
\resizebox{0.7\columnwidth}{!}{\begin{tabular}{c|c|c|c|c|c}
\hline
\textbf{Defense} & $\sigma$ & FoR & ACR & Accuracy (\%) &time (s/example)  \\
\hline
LS-RS & 0.00 & 1/18 & 0.000 & 93.15 & - \\
LS-RS & 0.25 & 1/18 & 0.478 & 80.18 & 12.689 \\
LS-RS & 0.50 & 1/18 & 0.562 & 68.89 & 13.038 \\
LS-RS & 1.00 & 1/18 & 0.522 & 50.55 & 12.879 \\
\hline
\hline
LS-RS & 0.00 & 5/18 & 0.000 & 91.26 & - \\
LS-RS & 0.25 & 5/18 & 0.475 & 79.70 & 10.375 \\
LS-RS & 0.50 & 5/18 & 0.561 & 68.09 & 9.643 \\
LS-RS & 1.00 & 5/18 & 0.540 & 50.07 & 9.913 \\
\hline
\hline
LS-RS & 0.00 & 9/18 & 0.000 & 86.92 & - \\
LS-RS & 0.25 & 9/18 & 0.444 & 77.56 & 7.904 \\
LS-RS & 0.50 & 9/18 & 0.533 & 66.68 & 7.771 \\
LS-RS & 1.00 & 9/18 & 0.524 & 50.72 & 7.985 \\
\hline
\hline
LS-RS & 0.00 & 13/18 & 0.000 & 79.40 & - \\
LS-RS & 0.25 & 13/18 & 0.387 & 71.32 & 6.131 \\
LS-RS & 0.50 & 13/18 & 0.485 & 62.14 & 5.807 \\
LS-RS & 1.00 & 13/18 & 0.480 & 47.70 & 5.827 \\
\hline
\end{tabular}} 
\vspace{1.0em}
\caption{Tuning the depth of the latent space for ResNet18 on CIFAR10.} 
\label{tab:CIFAR_space_tune}
\end{table}

\section{Related Work}
\label{sec:rela}
State-of-the-art adversarial defenses can be categorized into empirical defenses and certifiable defenses. Empirical defenses are robust to known adversarial attacks, but are still vulnerable to unknown stronger attacks. Adversarial training, as one of the most powerful empirical defenses, has been demonstrating its power in defending against strong attacks. By including adversarial examples during training phase, the network can directly learn how to classify adversarial examples ~\citep{madry2017towards, shafahi2019adversarial, zhang2019you} correctly. However, such defense could be subverted by stronger attacks such as iterative attacks~\citep{qian2018l2} or non-uniform attacks~\citep{zeng2020adversarial}. By contrast, certified defenses can provide provable robustness of the models against specific adversarial perturbations. They work by obtaining the perturbation $\delta$ with minimum $\lVert \delta \rVert_p$ such that $f(x) \neq f(x+\delta)$, where $f$ is a classifier and $x$ is the input data~\citep{cheng2017maximum,lomuscio2017approach,dutta2018output,fischetti2017deep}. Since the problem is NP-hard, a relaxation of the non-linearities can be quite useful. Linear inequality constraints show better efficiency~\citep{singh2018fast,gehr2018ai2,zhang2018efficient}. Some defenses integrate the verification methods into the training process, trying to minimize the robust loss directly. A bound derived with a semi-definite programming (SDP) relaxation was minimized as a regularizer~\citep{raghunathan2018certified}. In addition, ~\cite{wong2017provable} presents a similar defense but the upper bound is relaxed with a LP relaxation. Though such certified defense gives provable robustness, they cannot scale to large datasets and arbitrary model types.

\textbf{Randomized smoothing.} In a recent work~\citep{cohen2019certified}, randomized smoothing was formally proposed to provide a tight robustness guarantee for deep neural networks. By adding i.i.d. Gaussian noises to the inputs of a normally trained base classifier, a smoothed classifier is obtained to provide provably robust against $l_2$-norm bounded perturbations with statistical confidence. Based on the baseline randomized smoothing, further following-up works were carried out and proposed approaches to augment the efficacy of randomized smoothing. Substantially boosting certifiable robustness of smoothed classifiers, \citet{salman2019provably} combined adversary training. In \citep{zhai2020macer}, a modified version of randomized smoothing was proposed by injecting the approximated average certified radius into the training objective, with which the ACR of test samples could be further improved. Randomized smoothing is powerful and efficient during training. By simply augmenting the train samples with Gaussian noise, the computational cost is still satisfying. In fact, randomized smoothing is one of few certifiable defenses that could be applied to large datasets, for instance ImageNet. However, during test time, the efficiency of randomized smoothing is frustrating. In order to maintain the concentration bounds and obtain robustness guarantee with high confidence, a large quantity of Gaussian noises must be sampled (usually over 100000 for a single test sample), which could be extremely time consuming.

\textbf{Estimating Lipschitz constant of neural networks.} 
In order to map the certification from the latent space back to the input space, one possible solution is to evaluate the Lipschitz property of the encoder, which has never been an easy task due to the non-convexity of deep neural networks. In most works on estimating the Lipschitz of neural networks, it is recommended to firstly evaluate the Lipschitz constants for each layer and then multiply them together to obtain the overall Lipscthiz constant for the network. In \citep{gouk2021regularisation, tsuzuku2018lipschitz}, it is recommended to approximate the 
operator norm as the bound of Lipschitz constant for different layers using power method. However, in practice, power method is usually implemented with limited iterations, which could not provide tight bounds and therefore, leads to weak robustness guarantees. \citet{sedghi2018singular} proposed to compute the largest singular values of convolutional layers using SVD in the Fourier domain, which is much faster than computing SVD for weight matrices directly. Still, even in the Fourier domain, it could rather computationally expensive to perform SVD for each layer. \citet{singla2019bounding} found upper bounds for the spectral norm of convolution kernels by appropriately reshaping the weight matrix, which is computationally efficient but sacrifices the tightness of the bounds.


\vspace{0.3em}
\textbf{Orthogonal convolutional networks.} To avoid the loose estimated Lipschitz bound, we adopt the orthogonal neural networks, whose Lipschitz bound is imposed by their architectures.
One main challenge in designing these networks is to enforce orthogonality of the convolutional layers.
Early works flatten the higher-order convolutional kernel into a matrix, and enforces orthogonality of the resulted matrix~\citep{jia2017improving,cisse2017parseval}. However, this approach does not lead to the orthogonality of the original convolution.
Projected gradient descent via singular value clipping is proposed in \citet{sedghi2018singular}, which is expensive in practice.
Recent works adopt parameterization-based approaches, either using block convolutions~\citep{li2019preventing} Cayley transform~\citep{trockman2021orthogonalizing}, or spectral factorization~\citep{su2021scaling}.
After reviewing their flexibility and scalability, we adopt the algorithm and code of \citep{su2021scaling} to build our orthogonal modules.


\section{Conclusion}
\label{sec:conclu}
Aiming at increasing the efficiency of randomized smoothing, this work studies the potential of performing smoothing in the latent space. We propose Latent Space Randomized Smoothing, which is achieved by sampling Gaussian noises in the latent space. To this end, we are able to find robustness guarantee for the partial network after the latent space. In order to establish the equivalency between the robust certification between the latent space and input space, we adopt orthogonal convolutional layer and the norm-preserving GroupSort activation to build the encoder, the first sub-network. With Lipschitz constant equal to 1, we proved in the paper that the certified radius computed in the latent space is identical to that in the input space and rescaling the Lipschtiz constant of the encoder will not contribute to be tighter bounds. 

Our method LS-RS improves the efficiency of randomized smoothing drastically compared to IS-RS, while achieving slightly worse but still comparable performance on robustness guarantee. The utilization of norm-preserving layers motivates us to analyze adversarial perturbation from a different perspective in the latent space. The core benefit of such approach is that the computational complexity used for sampling and forward pass could be thoroughly reduced. 


\section{Acknowledgement}
\label{sec:ack}
This work is supported by National Science Foundation IIS-1850220 CRII Award 030742-00001 and DOD-DARPA-Defense Advanced Research Projects Agency Guaranteeing AI Robustness against Deception (GARD), and Adobe, Capital One and JP Morgan faculty fellowships.

\newpage
\bibliography{reference}

\newpage

\clearpage
\appendix

{\begin{center}{\bf \Large Appendix: \mytitle}\end{center}}

\section{Experiments on ImageNet}
\label{app:exp_imagenet}
In this section, we provide the experimental results on ImageNet. As described in the main text, we use WideResNet34 as the baseline neural network architecture and modify its first two layer and the following consecutive 7 residual blocks to be orthogonal. The widen factor for both baseline model and orthogonal model is set to two.

\begin{table}[!htbp]
\centering
\resizebox{0.7\columnwidth}{!}{\begin{tabular}{c|c|c|c|c|c}
\hline
\textbf{Defense} & $\sigma$ & FoR & ACR & Accuracy (\%) &time (s/example)  \\
\hline
IS-RS & 0.00 & 0/34 & 0.000 & 74.70 &  - \\
IS-RS & 0.25 & 0/34 & 0.654 & 73.89 &  99.775 \\
IS-RS & 0.50 & 0/34 & 1.166 & 71.92 &  87.738 \\
IS-RS & 1.00 & 0/34 & 2.123 & 69.19 &  87.893 \\
\hline
\hline
LS-RS & 0.00 & 16/34 & 0.000 & 71.62 &  - \\
LS-RS & 0.25 & 16/34 & 0.532 & 69.69 & 47.909 \\
LS-RS & 0.50 & 16/34 & 1.063 & 68.45 & 48.098 \\
LS-RS & 1.00 & 16/34 & 1.854 & 65.07 & 46.912 \\
\hline
\end{tabular}} 
\vspace{1.0em}
\caption{Efficiency and robustness evaluation on ImageNet.} 
\label{tab:ImageNet_compare}
\end{table}

By observing Table~\ref{tab:ImageNet_compare}, we can conclude that on ImageNet dataset, the average time used for certifying one single image could be reduced from around 91 seconds to around 47 seconds, a 48.3\% efficiency improvement, with 14.07\% degradation in performance for $\sigma = 0.25$, 8.83\% for $\sigma = 0.50$ and 12.67\% for $\sigma = 1.0$. 

We point out that, most current works \cite{li2019preventing, trockman2021orthogonalizing}, adopting orthogonal convolutions to build Lipschitz-bounded neural networks, mainly focus on CIFAR10, and barely scale to ImageNet. It is the first time that orthogonal convolutions are applied on ImageNet dataset. As shown by the results in Table~\ref{tab:ImageNet_compare}, there is a significant improvement in efficiency while a slight performance drop compared to the IS-RS.

\section{Training Details}
\label{app:train_details}
\textbf{Experiments on CIFAR10.} In all experiments, we used identical training hyperparameters except $\sigma$ to train the baseline models as well as our orthogonal models. The models were trained for 100 epochs. The learning rate was initialized to be 0.01 and was adjusted with a decaying factor 0.1 every 30 epochs. The optimizer was momentum with decaying factor 0.9. While evaluating the robustness of the smoothed classifier, all test examples were used.

\textbf{Experiments on ImageNet.} In all experiments, we used identical training hyperparameters except $\sigma$ to train the baseline models as well as our orthogonal models. The models were trained for 90 epochs. The learning rate was initialized to be 0.1 and was adjusted with a decaying factor 0.1 every 30 epochs. The optimizer was momentum with decaying factor 0.9. During test time, all examples of the validation set were used to compute the accuracy. However, for the sake of simplicity, a subset of images were sampled to compute the certified radius. Specifically speaking, we randomly sample one image for each class and evaluate their robustness certification. 

\section{Ethics Statement}
\label{app:Ethics_statement}

{\bf Adversarial examples could raise extremely high threats to modern machine learning systems.} Therefore, it is crucial to develop adversarial defenses.
Deep networks have shown impressive performance on various tasks such as object detection, speech recognition, and game playing. However, they could still fail catastrophically in the presence of small adversarial perturbations, which are imperceptible. The existence of such adversarial examples exposes a severe vulnerability in current ML systems. Therefore, it is vital to develop reliable and efficient defense mechanisms to increase the robustness of such machine learning models in the context of adversarial attacks.

There are two streams for design defense algorithms. Empirical defense mechanisms can defend against existing attacks by including adversarial examples during training but fail when stronger attackers strike. In contrast, defenses with guarantees are much more reliable and could withstand arbitrary attacks. Randomized smoothing (RS) is one of the most potent certifiable defense algorithms. In the framework of RS, the robustness certification for the victim model is obtained by sampling Gaussian noises and using them to augment the test images.

{\bf Randomized smoothing suffers from sampling an enormous number of noises. Our work can reduce the computational complexity and increase the efficiency of RS.}
However, despite the impressive defending power of randomized smoothing, they suffer from sampling an enormous Gaussian noise during the evaluation phase, which could be extremely time-consuming. Our work provides a new methodology of establishing robustness certification for randomized smoothing in a more controllable latent space of neural networks by utilizing orthogonal convolutions. Sampling Gaussian noise in the latent space can save much computational complexity and increase the efficiency of RS. 

Moreover, our work also carries out a new perspective of analyzing the adversarial signals in the latent space within a network, inspiring further potentials to design new defense mechanisms against adversarial attacks.

\end{document}